\newcommand{\overbar}[1]{\mkern 1.5mu\overline{\mkern-1.5mu#1\mkern-1.5mu}\mkern 1.5mu}
\newtheorem{assumption}{Assumption}
\newtheorem{theorem}{Theorem}
\begin{document}
\pagestyle{empty}

\title{Hierarchical Federated Learning Across Heterogeneous Cellular Networks}

\author{Mehdi~Salehi Heydar Abad and
        Emre~Ozfatura and
        Deniz~Gunduz and 
        Ozgur~Ercetin
\thanks{Emre Ozfatura and Deniz G{\"u}nd{\"u}z are with Information Processing and Communications Lab, Department of Electrical and Electronic Engineering,
Imperial College London Email: \{m.ozfatura, d.gunduz\} @imperial.ac.uk.}
\thanks{Mehdi Salehi Heydar Abad and Ozgur Ercetin are with the Faculty of Engineering and Natural Sciences, Sabanci University  Email: \{mehdis, oercetin\} @sabanciuniv.edu} 
\thanks{This work was supported in part by the Marie Sklodowska-Curie Action SCAVENGE (grant agreement no. 675891), and by the European Research Council (ERC) Starting Grant BEACON (grant agreement no. 677854).}

}
\maketitle
\thispagestyle{empty}
\begin{abstract}
We study collaborative machine learning (ML) across wireless devices, each with its own local dataset. Offloading these datasets to a cloud or an edge server to implement powerful ML solutions is often not feasible due to latency, bandwidth and privacy constraints. Instead, we consider federated edge learning (FEEL), where the devices share local updates on the model parameters rather than their datasets. We consider a heterogeneous cellular network (HCN), where small cell base stations (SBSs) orchestrate FL among the mobile users (MUs) within their cells, and periodically exchange model updates with the macro base station (MBS) for global consensus. We employ gradient sparsification and periodic averaging to increase the communication efficiency of this hierarchical federated learning (FL) framework. We then show using CIFAR-10 dataset that the proposed hierarchical learning solution can significantly reduce the communication latency without sacrificing the model accuracy.  


\end{abstract}
\maketitle

\section{Introduction}
Vast amount of data is generated today by mobile devices, from smart phones to  autonomous vehicles, drones, and various Internet-of-things (IoT) devices, such as wearable sensors, smart meters, and surveillance cameras. Machine learning (ML) is key to exploit these massive datasets to make intelligent inferences and predictions. Most ML solutions are centralized; that is, they assume that all the data collected from numerous devices in a distributed manner is available at a central server, where a powerful model is trained on the data. However, offloading these huge datasets to an edge or cloud server over wireless links is often not feasible due to latency and bandwidth constraints. Moreover, in many applications dataset reveal sensitive personal information about their owners, which adds privacy as another concern against offloading data to a centralized server. A recently proposed alternative approach is federated edge learning (FEEL) \cite{ML_overair2, ML_overair3, ML_overair4, int_edge1}, which enables ML at the network edge without offloading any data.

{\em Federated learning (FL) }is a collaborative ML framework  \cite{fedlearn1, fedlearn2}, where random subsets of devices are selected in an offline manner to update model parameters based on locally available data. Local models are periodically averaged and exchanged among participating devices. This can either be done with the help of a parameter server, which collects the local model updates and shares the updated global model with the devices; or, in a fully distributed manner, where the devices taking part in the collaborative training process seek consensus on the global model through device-to-device communications. 

Although the communication bottleneck of FL has been acknowledged in the ML literature, and various communication-efficient distributed learning techniques have been introduced, implementation of these techniques on wireless networks, particularly in heterogeneous cellular networks (HCNs), and the successful orchestration of the large scale learning problem have not been fully addressed. To this end, there are some very recent works that focus on the distributed machine learning framework with a particular focus on wireless communications \cite{ML_overair1,ML_overair2,ML_overair3,ML_overair4,ML_overair5,ML_overair6,ML_overair7,ML_overair8, FL_wireless1, FL_wireless2}. Most of these works propose  new communication-efficient learning strategies, specific to wireless networks,  which is called the {\em over-the-air aggregation} \cite{ML_overair1,ML_overair2,ML_overair3,ML_overair4,ML_overair5,ML_overair6,ML_overair7,ML_overair8}. In this approach, mobile computing nodes are synchronised for concurrent transmission of their local gradient computations or model updates over the wireless channel, and the parameter server receives the noisy version of the gradient sum via utilizing the superposition property of the wireless channel. Although, over-the-air aggregation is a promising solution to mitigate the communication bottleneck in the future communication networks, it imposes stringent synchronization requirements and very accurate power alignment, or, alternatively, the use of a very large number of antennas \cite{ML_overair8}.

\indent In this paper, we focus on FEEL across HCNs, and introduce a communication-efficient hierarchical ML framework. In this framework mobile users (MUs) with local datasets are clustered around small-cell base stations (SBSs) to perform distributed stochastic gradient descent (SGD) with decentralized datasets, and these SBSs communicate with a macro-cell base station (MBS) periodically to seek  consensus on the shared model of the corresponding ML problem. In order to further reduce the communication latency of this hierarchical framework, we utilize  gradient sparsification, and introduce an optimal resource allocation scheme for  synchronous gradient updates.

Distributed hierarchical SGD framework has been recently studied in \cite{sgd_local4, sgd_local5}, and hierarchical FL is considered in \cite{fedlearn8}. However, only periodic averaging strategy is employed in these works, and the wireless nature of the communication medium is not taken into account. Our contributions in this paper can be summarized as follows:
 \begin{itemize}
\item We introduce a hierarchical FL framework for HCNs and provide a holistic approach for the communication latency with a rigorous end-to-end latency analysis.  
\item We  employ communication efficient distributed learning techniques, in particular, sparsification and periodic averaging, jointly, and design a resource allocation strategy  to minimize the end-to-end latency.
\item Finally, focusing on the distributed image classification problem using popular dataset CIFAR-10, we demonstrate that, with the proposed approach, communication latency in a large scale FEEL framework, implemented over HCNs, can be reduced dramatically without sacrificing the accuracy much.  
\end{itemize}

\section{System Model}\label{s:System_Model}

Consider a HCN with a single MBS and $N$ SBSs. In this cellular setting, $K$ MUs collaborate to jointly solve an optimization problem of the form 
\begin{align}
    \min_{\mathbf{w}\in \mathbb{R}^Q} \frac{1}{L} \sum^{L}_{i=1} f_i(\mathbf{w}),
\end{align}
where $\mathbf{w}$ is a vector of size $Q\times 1$ denoting the parameters of the model to be learned and $f_i$ is the training loss associated with the $i$th training data sample. We assume that MU $k$ has a training data set $\mathcal{D}^k$ that is not shared with any other MUs due to bandwidth and privacy constraints. Note that calculating the loss over all of the dataset is time consuming and in some cases not feasible since it cannot fit in the memory. Thus, we employ minibatch SGD in which MU $k$ uses a subset $\mathcal{I}^k\in \mathds{D}^k$ of its dataset to calculate the loss. We assume that the batch size $|\mathcal{I}^{k}|=I$ for all $k=1,\ldots,K$.
In distributed learning, each MU calculates the loss gradients with respect to its own data set. Then, the gradients are shared with other MUs through either  peer-to-peer links, or using a central entity (MBS in this work). The MBS collects the gradients, aggregates them by taking the average, and eventually transmits the average gradient to all the MUs \footnote{Note that the MBS can update the model itself and transmit the updated model instead of the average gradients, thus avoiding replicating the update at $K$ MUs. However, it is possible to apply sparsification on the average gradient to improve latency in this manner.}. Each MU upon receiving the average gradient applies a  gradient descend step as follows:
\begin{equation}
    \mathbf{w}_{t+1} = \mathbf{w}_{t} - \eta_t  \left[ \frac{1}{K}\sum^{K}_{k=1} \frac{1}{|\mathcal{I}^{k}|}\sum_{i\in\mathcal{I}^{k}}\nabla f_i(\mathbf{w}_t) \right],
\end{equation}
where $\mathcal{I}^{k}\subseteq \mathcal{D}^k$ is the mini-batch randomly chosen from the data samples of MU $k$, and $\eta_t$ is the learning rate. The generic federated learning (FL) algorithm is described in Algorithm \ref{alg:dist}. The Algorithm \ref{alg:dist} is synchronous in the sense that MBS waits until the gradients from all the MUs are received.

\begin{algorithm}
\caption{Federated learning (FL) algorithm}\label{alg:dist}
\begin{algorithmic}[1]
\State Initialize $\mathcal{D}_k$, learning rate $\eta$, $\mathbf{w}_k=\mathbf{w}_0$
\State Each MU $k$ does the following
\For{$t=1,2,\ldots$}
\State Randomly select a mini-batch $\mathcal{I}_k\subseteq\mathcal{D}_k$
\State Calculate $\mathbf{g}_{k,t}=\frac{1}{|\mathcal{I}^{k}|}\sum_{i\in\mathcal{I}^{k}}\nabla f_i(\mathbf{w}_t)$
\State Transmit $\mathbf{g}_{k,t}$ to MBS \label{alg:Tx}
\State Receive $\mathbf{g}_{t} = \frac{1}{K}\sum_k \mathbf{g}_{k,t}$ \label{alg:Rx}
\State $\mathbf{w}_{t+1} = \mathbf{w}_t - \eta_t \mathbf{g}_{t}$
\EndFor
\end{algorithmic}
\end{algorithm}


Note that transmitting local gradients and receiving the average gradients (i.e., lines \ref{alg:Tx} and \ref{alg:Rx}) introduce latency to the training time specially considering the deep neural networks with tens of millions of parameters. Hence, an efficient communication protocol is required for this purpose considering the synchronous nature of Algorithm \ref{alg:dist}. 
We assume that the bandwith available for communication is $B$ Hz. We employ an orthogonal access scheme with OFDM, and assign distinct sub-carriers to MUs . Denote by $M=\frac{B}{B_0}$ the number of sub-carriers, where $B_0$ is sub-carrier spacing. We denote the channel gain between MU $k$ and MBS on sub-carrier $m$ by $\gamma_{k,m} = |h_{k,m}|^2$, where $h_{k,m} $ is the complex channel coefficient. The distance of MU $k$ to MBS is denoted by $d_k$, and the path loss exponent by $\alpha$.

\subsection{Uplink Latency}
For the latency analysis, we consider the fixed-rate transmission policy with sub-optimal power allocation introduced in \cite{goldsmith},  which is simple to implement and performs closely to the optimal water-filling power allocation algorithm. The power allocation policy is truncated channel inversion, which only allocates power if the channel gain is above a threshold, otherwise does not use that subcarrier.

Let $p_{k,m}$ denote the power allocated to sub-carrier $m$ by MU $k$ based on the observed channel gain $\gamma_{k,m}$, and  let $\mathcal{M}_k$ be the set of uplink (UL) sub-carriers assigned to MU $k$. We should satisfy the average power constraint:
\begin{align}
    \mathbb{E}\bigg[ \sum_{m\in\mathcal{M}_k} p_{k,m} \bigg] \leq P_{max},
\end{align}
where the expectation is with respect to the probability density function (pdf) of the channel gain, $f(\gamma_{k,m})$.
Since the channel gain is i.i.d over sub-carriers the power constraint becomes
\begin{align}
    \mathbb{E}\left[  p_{k,m} \right] \leq \frac{P_{max}}{|\mathcal{M}_k|},\ m\in \mathcal{M}_k . \label{eq:power-const1}
\end{align}
According to the truncated channel inversion policy, the allocated power by MU $k$ on sub-carrier $m\in\mathcal{M}_k$ becomes
\begin{equation}
p_{k,m} = \begin{dcases}
\frac{\rho_{k,m}}{\tilde{\gamma}_{k,m}}, & \gamma_{k,m}\geq \gamma^{th}_{k,m}\\
0 & \gamma_{k,m}< \gamma^{th}_{k,m}
\end{dcases},
\label{eq:power-allocation1}
\end{equation}
where $\rho_{k,m}$ ensures that the power constraint in  \eqref{eq:power-const1} is met and,

\begin{equation}
\tilde{\gamma}_{k,m}=\frac{\gamma_{k,m}}{N_0 B_0 d_{k}^{\alpha}}
\end{equation}
is the normalized channel gain and 
 $N_0 B_0$ is the AWGN noise power on a single sub-carrier. The average power constraint in \eqref{eq:power-const1} results in \cite{goldsmith}:
\begin{align}
    \rho_{k,m}(\gamma^{th}_{k,m}) = \frac{P_{max}}{|\mathcal{M}_k| N_0 B_0 d_k^{\alpha} \overbar{[\frac{1}{\gamma_{k,m}}]}_{\gamma^{th}_{k,m}}},
\end{align}
where
\begin{align}
    \overbar{\left[\frac{1}{\gamma_{k,m}}\right]}_{\gamma^{th}_{k,m}} \overset{\Delta}{=} \int^{\infty}_{\gamma^{th}_{k,m}} \frac{f(\gamma_{k,m})}{\gamma_{k,m}} d\gamma_{k,m}.
\end{align}
Rather than Shannon capacity, we consider a practical approach where the bits are modulated using $M$-ary QAM. For a given target {\em bit error rate} (BER) requirement, the instantaneous UL rate of MU $k$ to the MBS on sub-carrier $m$ becomes \cite{goldsmith}:
\begin{align}
    U_{k,m} = B_0 \log_2\left( 1 + \frac{1.5}{-\ln(5 BER)}\frac{p_{k,m} \gamma_{k,m}}{N_0B_0 d_{k}^{\alpha}}\right). \label{eq: inst-rate1}
\end{align}
By substituting \eqref{eq:power-allocation1} into \eqref{eq: inst-rate1}, the instantaneous rate becomes:

\begin{align}
U_{k,m} = 
B_0 \log_2\left( 1 + \frac{1.5\rho_{k,m}(\gamma^{th}_{k,m})}{-\ln(5 BER)} \right)\mathds{1}_{\gamma_{k,m} \geq \gamma^{th}_{k,m}}, 
\end{align}
where $\mathds{1}_{x}=1$ if the argument $x$ is true and $0$ otherwise.
For the maximum expected transmission rate, we have

\begin{align}\label{eq:ULrate}
    \bar{U}_{k,m} = \max_{\gamma^{th}_{k,m}}  B_0 \log_2\left( 1 + \frac{1.5\rho_{k,m}(\gamma^{th}_{k,m})}{-\ln(5 BER)}\right)\mathbb{P}(\gamma_{k,m} \geq \gamma^{th}_{k,m})
\end{align}

The average UL rate of MU $k$ for gradient aggregation:
\begin{align}
    \bar{U}_k = \sum_{m\in \mathcal{M}_k}\bar{U}_{k,m} \label{eq:UL-Worker-CH}
\end{align}
Each MU uses $\hat{Q}$ bits to quantize each element of its gradient vector. Since the model has $Q$ parameters,  each MU needs to send $Q\cdot \hat{Q}$ bits in total to the MBS at each iteration. To minimize the latency of uploading the gradients to the MBS, we should allocate the sub-carriers so that the minimum average UL rate among MUs is maximized. Hence, we perform the following optimization problem:

\begin{align}\label{opt-ul}
    &\max_{\mathcal{M}_k, \gamma^{th}_{k,m}}\min\left(\sum_{m\in \mathcal{M}_k}\bar{U}_{k,m}\right)^K_{k=1} \nonumber\\
    &\text{s.t.}  \bigcup^{K}_{k=1} \mathcal{M}_k = \mathcal{M} 
\end{align}
For the given solution of \eqref{opt-ul}, i.e., the optimal sub-carrier allocation,  $\left\{\mathcal{M}^{\star}_{k}\right\}^K_{k=1}$, the uplink latency of MU $k$ on average is
\begin{align}
    T^{UL}_{k} = \frac{Q\Hat{Q}}{\bar{U}^\star_k},
\end{align}
where $\bar{U}^\star_k = \sum_{m\in \mathcal{M}^{\star}_k}\bar{U}_{k,m}$. Accordingly, the uplink latency in aggregating the gradients of MUs is equal to 
\begin{equation}
T^{U}=\max_k(T^{U}_{k}).
\end{equation}

\subsection{Downlink Latency}
After all the MUs transmit their gradients to the MBS, the average gradient $\mathbf{g} = \sum_{k=1}^{K} \mathbf{g}_k$ is calculated and MBS is required to transmit the average value back to the MUs. However, since the all workers share a common message, we employ a broadcast policy for this case. We assume that the MBS employ a rateless coding scheme that is adapted to the worst instantaneous signal-to-noise ratio (SNR) on each subcarrier. We assume that the MBS allcoates its available power uniformely over all subcarriers.
Specifically, let $SNR_{k,m}(t)$ be the SNR of worker $k$ on subcarrier $m$. Then, the instantaneous broadcast rate on subcarrier $m$ becomes:

\begin{align}
    R_m(t) = \min_{k} B_0 \log_2( 1 + SNR_{k,m}(t)),
\end{align}
where, 
\begin{align}
    SNR_{k,m}(t) = \frac{P_{max} \gamma_{k,m}(t)}{M N_0 B_0 d^{\alpha}_k}
\end{align}

The broadcast will end when all $Q\cdot\hat{Q}$ parameters are received by the workers. The broadcast latency, $T^{DL}$,  can be computed as follows:
\begin{align}
    T^{DL} = \mathds{E}\min \{t | T_s\sum^{t}_{\tau=1}\sum^{M}_{m=1}R_{m}(\tau)\geq Q\hat{Q} \} ,
\end{align}
where the expectation is with respect to the PDF of the channel gain. Per iteration, the end-to-end latency of the FL protocol is given by $T^{FL} = T^{UL} + T^{DL}$

\subsection{Sub-carier Allocation Policy}

\begin{algorithm}
\caption{Sub-carrier allocation}\label{alg:sub_alloc}
\begin{algorithmic}[1]
\State Initialize (number of sub-carriers $M^{U}$, number of MUs $K$)
\State Initialize $M^{U}_k=1$ for all $k$\label{alg:sub_allocL2}
\While{$\sum_{k=1}^{K}M^{U}_k < M^{U}$}
\State optimize $\gamma^{th}_{k,m}$ in \eqref{eq:ULrate}
\State $k^{\star} = \arg\min \bar{U}_{k}$
\State $M_{k^{\star}}\leftarrow M_{k^{\star}} + 1$
\EndWhile
\end{algorithmic}
\end{algorithm}

The optimal sub-carrier allocation problem in \eqref{opt-ul} is presented in Algorithm \ref{alg:sub_alloc}. It starts by assigning a single sub-carrier for each MU. Then with a single sub-carrier, each MU $k$ optimizes the threshold in \eqref{eq:ULrate}. Then the algorithm looks for a MU with minimum average UL rate, i.e.,  $k^{\star} = \arg\min \bar{U}_{k}$, and allocates a single carrier to that MU. Then MU $k^{\star}$ updates its threshold and $\bar{U}^\star_k$ value. This procedure continues until all available sub-carriers are allocated. The following theorem establishes the optimality of the proposed policy.

\begin{theorem}
The sub-carrier allocation policy in Algorithm \ref{alg:sub_alloc} is optimal.
\end{theorem}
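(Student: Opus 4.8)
My plan is to reduce \eqref{opt-ul} to a one-dimensional allocation problem and then establish optimality of the greedy rule by a ``greedy stays ahead'' / exchange argument. \emph{Step 1 (reduction).} Because the channel gains $\gamma_{k,m}$ are i.i.d.\ across sub-carriers, the optimized per-sub-carrier rate $\bar U_{k,m}$ in \eqref{eq:ULrate} depends on the chosen allocation only through the number $n:=|\mathcal M_k|$ of sub-carriers given to MU $k$ (it enters only via $\rho_{k,m}\propto 1/|\mathcal M_k|$), not through \emph{which} sub-carriers are used; denote this common value $U_k(n)$ and put $g_k(n):=n\,U_k(n)$, so that $g_k(n)=\bar U_k$ precisely when MU $k$ holds $n$ sub-carriers. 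Since the sets $\mathcal M_k$ partition $\mathcal M$, \eqref{opt-ul} is then equivalent to
\[
\max\Big\{\ \min_{1\le k\le K} g_k(n_k)\ :\ n_k\ge 1,\ \textstyle\sum_{k=1}^{K} n_k = M\ \Big\},
\]
where restricting to $n_k\ge1$ loses nothing (an MU with no sub-carrier has rate $0$ and is dominated, provided $M\ge K$). Algorithm~\ref{alg:sub_alloc} is exactly the greedy rule for this problem: it starts from $(1,\dots,1)$ and at each step gives one more sub-carrier to an MU currently attaining the minimum.

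\emph{Step 2 (monotonicity lemma).} I would next show that each $g_k$ is nondecreasing, i.e.\ $g_k(n)\le g_k(n+1)$. From the formula for $\rho_{k,m}$ one can write $g_k(n)=\sup_{\gamma^{th}}\, n\,B_0\log_2\!\big(1+c_k(\gamma^{th})/n\big)\,\mathbb P(\gamma_{k,m}\ge\gamma^{th})$ with $c_k(\gamma^{th})>0$, so it suffices that for each fixed threshold the scalar map $n\mapsto n\log_2(1+c/n)$ is nondecreasing; this is the elementary inequality $\ln(1+x)\ge x/(1+x)$ for $x\ge0$. A pointwise supremum of nondecreasing functions is nondecreasing, giving the lemma. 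Informally: handing an MU an extra sub-carrier dilutes its per-carrier power, but concavity of $\log$ more than makes up for it through the added parallel stream.

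\emph{Step 3 (exchange argument).} Let $n^{(\tau)}$ be the allocation after $\tau$ iterations of Algorithm~\ref{alg:sub_alloc} (so $\sum_k n^{(\tau)}_k=K+\tau$) and $v_\tau:=\min_k g_k(n^{(\tau)}_k)$. Using Step~2, one first checks that $v_\tau$ is nondecreasing in $\tau$ (incrementing the bottleneck coordinate cannot lower its value by monotonicity, and every other coordinate was already $\ge v_\tau$). I then prove, by induction on $\tau$, the ``stays ahead'' claim: $v_\tau\ge\min_k g_k(m_k)$ for \emph{every} feasible $m$ with $m_k\ge1$ and $\sum_k m_k=K+\tau$. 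The base case $\tau=0$ is immediate since then $m\equiv(1,\dots,1)$. For the inductive step, suppose some $m$ with $\sum_k m_k=K+\tau+1$ had $v^\star:=\min_k g_k(m_k)>v_{\tau+1}$. Picking a minimizing index $k^{\star\star}$ of $n^{(\tau+1)}$ and using monotonicity of $g_{k^{\star\star}}$ together with $g_{k^{\star\star}}(n^{(\tau+1)}_{k^{\star\star}})=v_{\tau+1}<v^\star\le g_{k^{\star\star}}(m_{k^{\star\star}})$ forces $n^{(\tau+1)}_{k^{\star\star}}<m_{k^{\star\star}}$; since the two allocations have the same total, some coordinate $j$ has $n^{(\tau+1)}_j>m_j$. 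Tracing the greedy back to the step $s\le\tau+1$ at which coordinate $j$ was bumped from $m_j$ to $m_j+1$: at that step $j$ was a bottleneck, so $g_j(m_j)=v_{s-1}\le v_{\tau+1}<v^\star\le g_j(m_j)$, a contradiction. Taking $\tau=M-K$ shows the output of Algorithm~\ref{alg:sub_alloc} attains the maximum in Step~1, hence solves \eqref{opt-ul}.

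\emph{Main obstacle.} The one genuinely non-routine point is the monotonicity lemma of Step~2: adding a sub-carrier both helps (another stream) and hurts (less power per stream, and a possibly different optimal threshold), so ``$g_k$ nondecreasing'' has to be proved, not assumed --- the envelope/supremum representation plus the scalar inequality above is what makes it work. Once that is in hand, the rest is a textbook greedy max--min exchange argument, and I do not expect either the induction or the reduction to present difficulties.
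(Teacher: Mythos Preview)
Your proposal is correct and follows the same high-level strategy as the paper (reduce to a discrete allocation problem, then argue by induction that the greedy rule is optimal), but it is considerably more careful in two respects.

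First, the paper simply asserts the monotonicity $\bar U_{k^\star}(m_k+1)>\bar U_{k^\star}(m_k)$ as ``obvious''; you actually prove it (Step~2) via the envelope representation and the scalar inequality $\ln(1+x)\ge x/(1+x)$. This is exactly the nontrivial point, since adding a sub-carrier dilutes the per-carrier power, and your argument handles the joint optimization over the threshold correctly by taking the pointwise supremum of nondecreasing functions.

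Second, the paper's inductive step only compares the greedy output at $K{+}m{+}1$ sub-carriers to allocations obtained by adding one sub-carrier to the greedy output at $K{+}m$; it never rules out allocations that are \emph{not} extensions of the previous greedy state (i.e., it tacitly assumes optimal substructure). Your ``greedy stays ahead'' exchange argument (Step~3) compares $v_\tau$ against \emph{every} feasible allocation with the same total and closes this gap cleanly. One small remark: your Step~3 is phrased as an induction, but the contradiction you derive (tracing back to the step where coordinate $j$ was bumped past $m_j$ and using monotonicity of $v_\tau$) does not actually invoke the inductive hypothesis --- it is a direct argument valid for every $\tau$, which is fine and in fact slightly stronger.
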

\begin{proof}
The proof is by induction. Let the number of sub-carriers be $M^{U}= K + 1$. Then, a single sub-carrier is allocated to each MU first (line \ref{alg:sub_allocL2}), since otherwise there will be a single MU with a rate of zero. The optimal choice is to allocate the remaining single sub-carrier to $k^{\star} = \arg\min \bar{U}_{k}$. To see why, let $\bar{U}_{k^{\star}(m_k)}$ denote the rate of MU $k^{\star}$ when $m_k$ sub-carriers are allocated. Now consider an alternative policy that allocates the remaining sub-carrier to a different MU $k\neq k^{\star}$. Denote the rates achieved under this alternative policy by $\bar{U}^{\ast}_{k}$. It is obvious that $\bar{U}_{k^\star}(2)>\bar{U}_{k^\star}(1) = \min_{k} \bar{U}^{\ast}_{k}$. Thus, the optimal policy allocates the remaining sub-carrier to MU $k^{\star}$. Now, assume that Algorithm \ref{alg:sub_alloc} allocates $M^{U}=K + m$ sub-carrier optimally. We need to prove the optimality of the algorithm for $M^{U}=K + m + 1$. Consider that $K + m$ sub-carriers are allocated and we need to allocate the last sub-carrier. The last sub-carrier is allocated to $k^{\star}$ so that number of its sub-carrier becomes $m_k+1$. The alternative policy allocates the last sub-carrier to another MU $k\neq k^{\star}$. It is clear that $\bar{U}_{k^\star}(m_k+1)>\bar{U}_{k^\star}(m_k) = \min_{k} \bar{U}^{\ast}_{k}$. Hence, the alternative policy is sub-optimal. This concludes the proof.
\end{proof}

\section{Distributed Hierarchical Federated Learning}
In centralized FL \cite{fedlearn6}, MUs transmit their computation results (local gradient estimates) to the parameter server (MBS) for aggregation at each iteration. However, in large scale networks, this centralized framework may result in high communication latency and thus increases the convergence time. To this end, we introduce {\em hierarchical federated learning}, in which multiple parameter servers  are employed to reduce the communication latency.
In the proposed hierarchical framework, MUs are clustered according to their locations. In each cluster a small cell base station (SBS) is tasked with being the parameter server. At each iteration, MUs in a cluster send their local gradient estimates to the SBS for aggregation, instead of the MBS. Then, the SBSs compute the average gradient estimates and transmit the results back to their associated MUs to update their model accordingly. 

In this framework, gradient communication is limited to clusters, particularly between the MUs and the corresponding SBSs. This not only reduces the communication distance, communication latency, but also allows the spatial reuse of the available communication resources. On the other hand, limiting the gradient communications within clusters may prevent convergence to a single parameter model (i.e., global consensus).  

To this end, we combine aforementioned intra-cluster gradient aggregation method with inter-cluster model averaging strategy, such that after every $H$ consecutive intra-cluster iterations, SBSs send their local model updates to the MBS to establish a global consensus. The overall procedure is illustrated in  Figure \ref{fig:HFL}.

\begin{figure*}[ht]
\begin{subfigure}{1\textwidth}
  \centering
  \includegraphics[width=.8\linewidth]{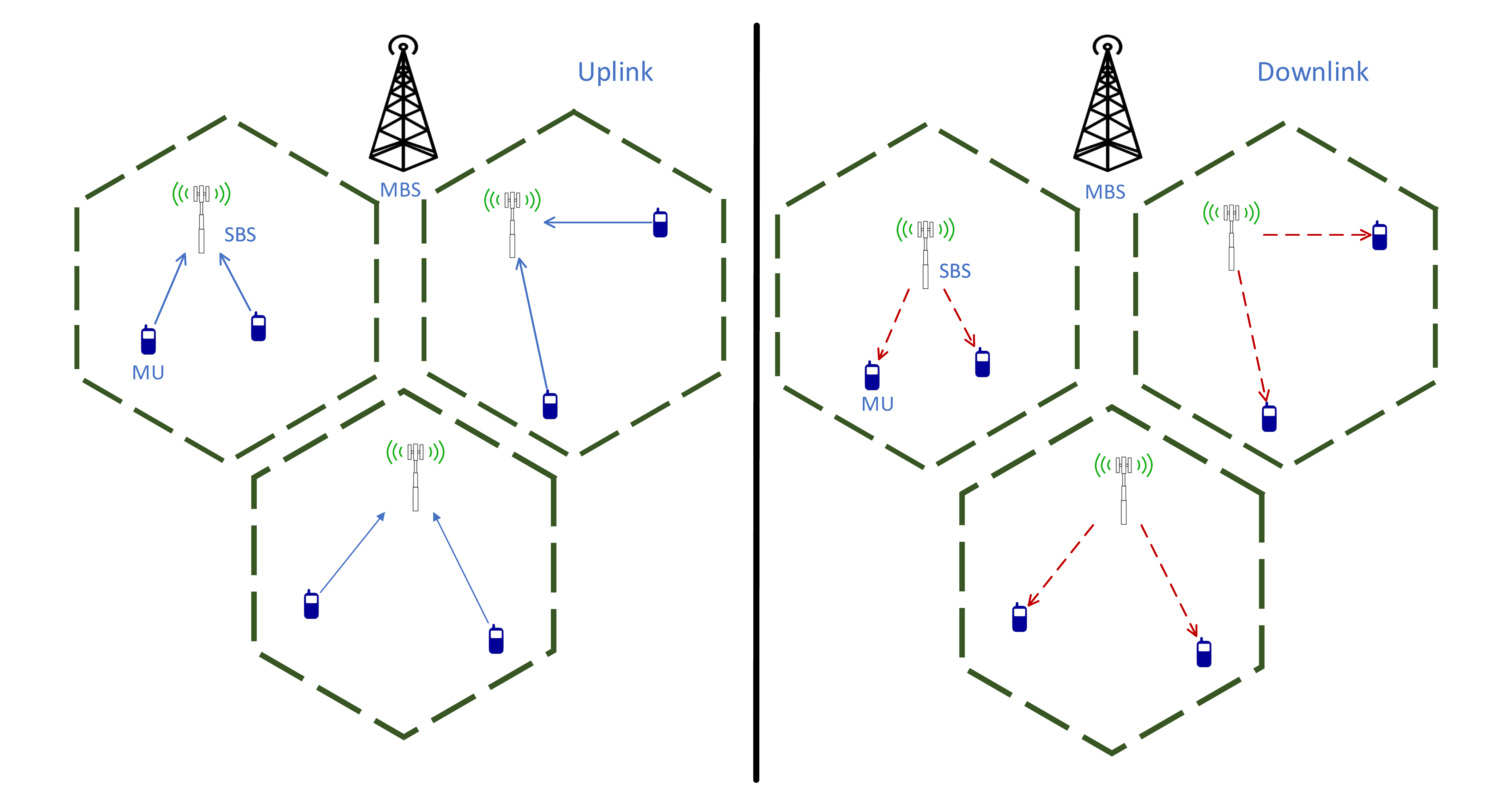}  
  \caption{Local gradient update}
  \label{fig:HFLl}
\end{subfigure}
\begin{subfigure}{1\textwidth}
  \centering
  \includegraphics[width=.8\linewidth]{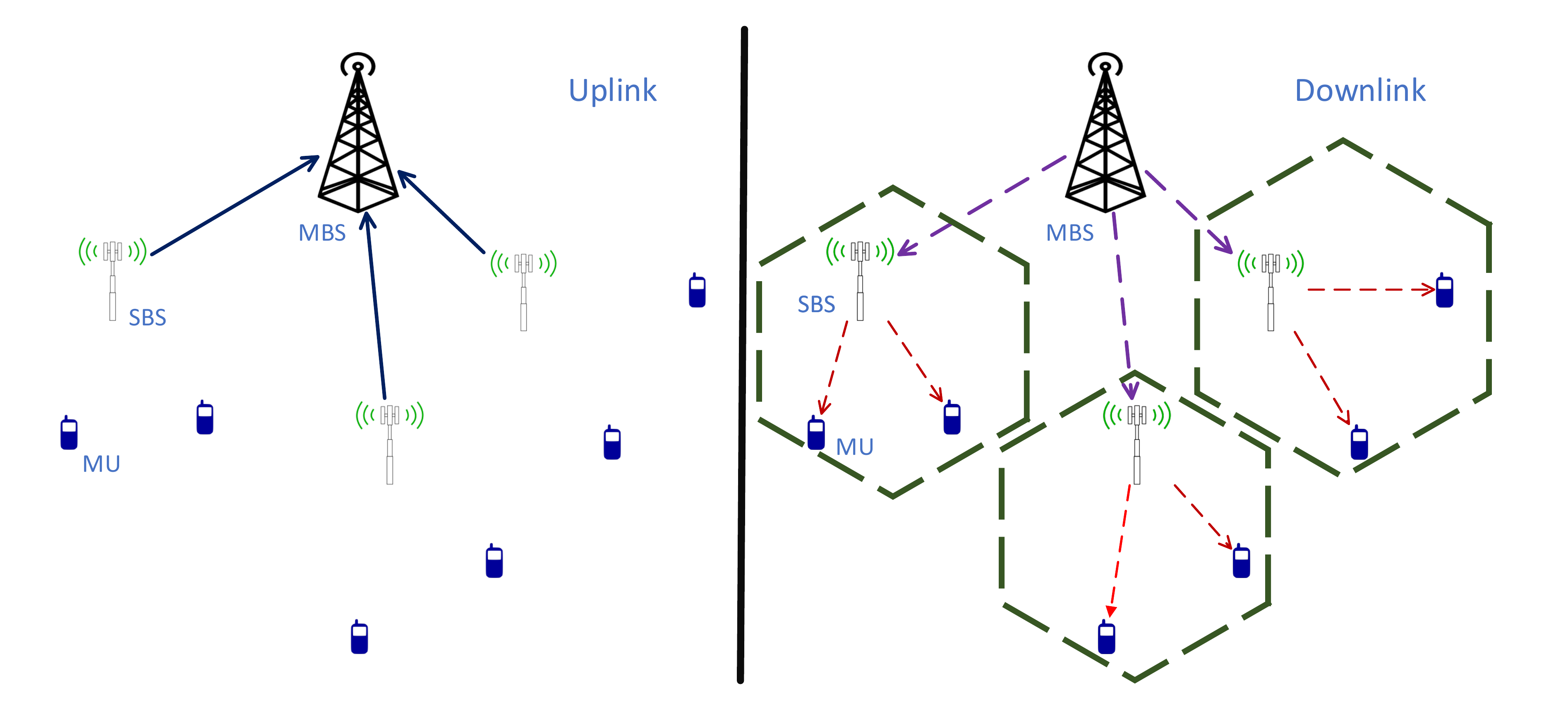}  
  \caption{Global model averaging}
  \label{fig:HFLlg}
\end{subfigure}
\caption{Hierarchical FL}
\label{fig:HFL}
\end{figure*}

Denote by $\mathcal{C}_n$ the set of MUs belonging to cluster $n=1,\ldots,N$, with $N$ being the number of clusters. During each consecutive $H$ intra-cluster iterations, the local gradient estimates of the MUs are aggregated within the clusters. For example, at iteration $t$ each MU $k$, $k\in\mathcal{C}_n$ for $n=1,\ldots,N$ computes the local gradient estimate, denoted by $\mathbf{g}_{n,k,t}=\frac{1}{|\mathcal{I}^{k}|}\sum_{i\in\mathcal{I}^{k}}\nabla f_i(\mathbf{w}_{n,t})$, and transmits it to the SBS in cluster $n$. Then, the SBS $n$ aggregates the gradients simply by taking the average,
\begin{align}
    \mathbf{g}_{n,t} = \frac{ \sum_{k\in\mathcal{C}_n} \mathbf{g}_{n,k,t}}{|\mathcal{C}_n|}. \label{eq:avg-grad-cluster}
\end{align}
This average is then sent back by the SBS to the MUs in its cluster, and the model at cluster $n=1,\ldots,N$ is updated as
\begin{align}
    \mathbf{w}_{n, t+1} =  \mathbf{w}_{n,t} - \eta_t \mathbf{g}_{n,t}
\end{align}
After $H$ iterations, all SBSs transmit their models to the MBS through UL fronthaul links. The MBS calculates the model average $\mathbf{w} = \sum^N_{n=1}\frac{\mathbf{w}_{n}}{N}$, and transmits it to the SBSs over the DL fronthaul links. Upon receiving the model update, the SBSs share it with the MUs in their cluster. Hence, after $H$ iterations all the MUs share a common model parameters, globally. The HFL algorithm is presented in Algorithm \ref{alg:dist-hier}.

\begin{algorithm}
\caption{Hierarchical federated learning (HFL)}\label{alg:dist-hier}
\begin{algorithmic}[1]
\State Initialize $\mathcal{D}_k$, learning rate $\eta$, $\mathbf{w}_k=\mathbf{w}_0$, $H$
\State Each MU $k$ does the following
\For{$t=1,2,\ldots$}
\State Randomly select a mini-batch $\mathcal{I}_k\subseteq\mathcal{D}_k$
\State Calculate $\mathbf{g}_{k,t}=\frac{1}{|\mathcal{I}^{k}|}\sum_{i\in\mathcal{I}^{k}}\nabla f_i(\mathbf{w}_t)$
\State Transmit $\mathbf{g}_{k,t}$ to SBS
\State Receive $\mathbf{g}_{n,t} = \frac{1}{K}\sum_k \mathbf{g}_{k,t}$ \label{alg:Rx}
\State $\mathbf{w}_{n, t+1} =  \mathbf{w}_{n,t} - \eta_t \mathbf{g}_{n,t}$
\If{$t$ is divisible by $H$}
\State $n$-th SBS, $n=1,\ldots,N$ send $\mathbf{w}_n$ to MBS
\State $\mathbf{w}\leftarrow \sum^{N}_{n=1}\frac{\mathbf{w}_n}{N}$
\State MBS transmit $\mathbf{w}$ to all SBSs
\State SBSs transmit $\mathbf{w}$ to their MUs
\State $\mathbf{w}_n=\mathbf{w}$
\EndIf
\EndFor
\end{algorithmic}
\end{algorithm}

\subsection{Communication Latency analysis}
In the hierarchical scheme, after clustering the MUs, clusters are colored so that any two clusters with the same color are separated by at least distance $D_{th}$ to minimize  interference between clusters. For simplicity, we assume that there is zero interference on receivers located beyond $D_{th}$. If $N_{c}$  colors are used in total, the available OFDM sub-carriers are divided into $N_{c}$ groups, and the sub-carriers in each group are allocated to clusters with a particular color. Consequently, in each cluster the number of available OFDM sub-carriers is proportional to  $1/N_{c}$. Before the delay analysis, we have the following assumptions regarding the location of the MUs.
\begin{assumption}
The MUs are uniformly distributed and each cluster contains equal number of MUs.
\end{assumption}
\begin{assumption}
The SBSs are located at the origin of the corresponding clusters.
\end{assumption}
In the local gradient update step of HFL (see Figure \ref{fig:HFLl}) communication latency analysis is similar to the FL. The only difference is the number of sub-carriers inside the clusters which is $M/N_{c}$. Moreover, the MUs transmit to the SBSs instead of MBS. Denote by $\bar{U}^{*n}_{k}$ the maximum average UL rate of MU $k\in\mathcal{C}_n$. The UL latency of gradient aggregation in cluster $n$ is denoted by $\Gamma^{U}_{n}=\max_k \frac{Q\hat{Q}}{\bar{U}^{*n}_{k}}$,  $\forall k\in \mathcal{C}_n$. Similarly, let $\bar{R}^{*n}_{k}$ the maximum average DL rate of MU $k\in\mathcal{C}_n$. The DL latency of gradient aggregation in cluster $n$ is denoted by $\Gamma^{D}_{n}=\max_k \frac{Q\hat{Q}}{\bar{R}^{*n}_{k}}$,  $\forall k\in \mathcal{C}_n$.

After $H$ iterations, SBSs send the model to the MBS for the purpose of averaging the clusters model. Let $U^{SBS}$, $R^{SBS}$ be the UL, DL rate of SBSs to the MBS, respectively. The UL, DL latency at each period of $H$ iterations become $\Theta^{U}=\frac{Q\hat{Q}}{U^{SBS}}$ and $\Theta^{D}=\frac{Q\hat{Q}}{R^{SBS}}$, respectively. There is also the latency of transmitting the average model by SBSs to their associated MUs. The average latency associated with a period of hierarchical distributed SGD becomes.

\begin{align}
    \Gamma^{period} = \max_{n\in \mathcal{N} }\left(\sum^{H}_{i=1} \Gamma^{U}_{n}(i)+\Gamma^{D}_{n}(i)\right)+\Theta^{U}+\Theta^{D} + \max_n(\Gamma^D_n),
\end{align}
where $T^{U}_{n}(i)$ and $T^{D}_{n}(i)$ is the latency of $i$-th iteration of UL and DL aggregation in cluster $n$, respectively. The average per iteration latency of HFL becomes $\Gamma^{HFL} = \frac{\Gamma^{period}}{H}$.


\section{Sparse Communications}
\begin{algorithm}
\caption{Sparse Federated SGD algorithm \cite{SGD_sparse1}}\label{alg:noc}
\begin{algorithmic}[1]
\State Initialize $\mathcal{D}_k$, learning rate $\eta$, $\mathbf{w}_k=\mathbf{w}_0$, sparse factor $\phi^{ul}_{MU}$
\State Each MU $k$ does the following
\For{$t=1,2,\ldots$}
\State Randomly select a mini-batch $\mathcal{I}_k\subseteq\mathcal{D}_k$
\State Calculate $\mathbf{g}_{k,t}=\frac{1}{|\mathcal{I}^{k}|}\sum_{i\in\mathcal{I}^{k}}\nabla f_i(\mathbf{w}_t)$
\State $\mathbf{u}_{k,t} = \sigma \mathbf{u}_{k,t-1} + \mathbf{g}_{k,t}$
\State $\mathbf{v}_{k,t} = \mathbf{v}_{k,t-1} + \mathbf{u}_{k,t}$
\State $g_{th} \leftarrow \phi^{ul}_{MU}$ of $|\mathbf{v}_{k,t}|$
\State $mask \leftarrow |\mathbf{v}_{k,t}|\geq g_{th}$
\State $\hat{\mathbf{g}}_{k,t} = \mathbf{v}_{k,t} \odot mask$
\State $\mathbf{u}_{k,t} = \mathbf{u}_{k,t} \odot \neg mask$
\State $\mathbf{v}_{k,t} = \mathbf{v}_{k,t} \odot \neg mask$
\State send $\hat{\mathbf{g}}_{k,t}$ to MBS
\State receive $\mathbf{g}_{t} = \sum^{K}_{k=1} \hat{\mathbf{g}}_{k,t}$
\State $\mathbf{w}_{t+1} = \mathbf{w}_t - \eta_t \mathbf{g}_{t}$
\EndFor
\end{algorithmic}
\end{algorithm}

The trend of going deeper in the \emph{depth} of the neural networks for increasing the accuracy have resulted in NNs with tens of millions of parameters. The amount of data required to be communicated is challenging even for cable connections let alone the wireless links. On top of performing periodical parameter averaging, sparse communication can be used to significantly improve the latency. In sparsification the fact that the gradient vector is sparse is used to transmit only a fraction (i.e., $1-\phi$ ) of parameters and considerably reduce latency.

To make sure that all gradients are transmitted eventually, a separate parameter vector, $\mathbf{v}$ is used to accumulate the error associated with the gradients that are not transmitted. The gradients that are not transmitted will grow in time, as recorded by $\mathbf{v}$, and eventually will be transmitted. More specifically the error buffer 
is calculated as
\begin{align}
    \mathbf{v}_{k,t} = \mathbf{v}_{k-1,t} + \mathbf{g}_{k,t}
\end{align}
Now each MU $k$, instead of transmitting $\mathbf{g}_{k,t}$ transmits $sparse(\mathbf{v}_{k,t})$ to be aggregated at the MBS (or SBSs in the clusters).

Note that the vanilla SGD used an Algorithm \ref{alg:dist} and \ref{alg:dist-hier} is the simplest form of a optimizer and its performance is quite poor in large scale optimization problems. An efficient way of accelerating the performance of vanilla SGD is to apply momentum method. In momentum method the parameters are updated as following:
\begin{align}\label{eq:momsgd}
    \mathbf{u}_t = \sigma \mathbf{u}_{t-1} + \mathbf{g}_t \nonumber\\
    \mathbf{w}_{t+1} = \mathbf{w}_{t} - \eta_t \mathbf{u}_t,
\end{align}
where $\sigma$ is the momentum and $\mathbf{g}$ is the aggregated gradient.

Directly applying momentum to the sparsed gradients will result in a poor performance and momentum correction is required. Here, we directly employ the method preoposed in \cite{SGD_sparse1}

\begin{align}
    &\mathbf{u}_{k,t} = \sigma \mathbf{u}_{k,t-1} + \mathbf{g}_{k,t}\\
    &\mathbf{v}_{k,t} = \mathbf{v}_{k-1,t} + \mathbf{u}_{k,t}\\
    &\mathbf{w}_{t+1} = \mathbf{w}_{t} - \frac{\eta}{K}\sum^{K}_{k=1}sparse(\mathbf{v}_{k,t})
\end{align}
Sparsification delays transmitting gradients that are too small. When they are finally transmitted they become stale and slow down the convergence. To combat the staleness \cite{SGD_sparse1}, we apply the inverted sparsification to both accumulated gradients and momentum factor as follows:
\begin{align}
    &mask \leftarrow |\mathbf{v}_{k,t}|\geq g_{th}\\
    &\mathbf{u}_{k,t} = \mathbf{u}_{k,t} \odot \neg mask\\
    &\mathbf{v}_{k,t} = \mathbf{v}_{k,t} \odot \neg mask
\end{align}
The mask simply prevents the stale momentums to be applied. The detailed algorithms for sparse federated SGD is represented in Algorithm \ref{alg:noc}.

\begin{table}[]
\caption{ Parameters}
\begin{tabular}{|p{2cm}|p{6cm}|}
\hline
 Parameter & Definition\\ \hline
 $\phi^{ul}_{MU}$, $\phi^{dl}_{SBS}$, $\phi^{ul}_{SBS}$, $\phi^{dl}_{MBS}$     &   Sparsification parameters for  uplink from MU to SBS, downlink from SBS to MU, uplink from SBS to MBS and downlink from MBS to SBS.  \\ \hline
 $\mathbf{e}$, $\mathbf{e}_{n}$, $\mathbf{\epsilon}_{n}$ &  Model errors due to sparsification before downlink from MBS to SBSs, downlink from $SBS_{n}$ to MU and uplink from $SBS_{n}$ to MBS respectively. \\ \hline
  $\mathbf{w}_{k}$, $\mathbf{W}_{n}$ and $\mathbf{W}$ &   Parameter model at $k$th MU, $n$th SBS and MBS respectively.\\ \hline
$\Delta_{\mathbf{W}}$, $\Delta_{\mathbf{W}_n}$ and $\delta\mathbf{w}_{n}$ &  Model difference send to SBSs from MBS, to MBS from $SBS_{n}$ and to MUs from $SBS_{N}$ respectively.\\ \hline
\end{tabular}
\end{table}

\subsection{Sparse Communication and Error Accumulation}
Our proposed HFL framework consists of 4 communication steps: uplink from MU to SBS, downlink from SBS to MU, uplink from SBS to MBS and downlink from MBS to SBS. For each communication step, we employ different sparsification parameters, $\phi^{ul}_{MU}$, $\phi^{dl}_{SBS}$, $\phi^{ul}_{SBS}$ and $\phi^{dl}_{MBS}$ respectively, to speed up the communication. We introduce the function $\Omega(\mathbf{V},\phi):\mathcal{R}^d \rightarrow \mathcal{R}^d$, which maps a $d$ dimensional vector to its sparse form where only $1-\phi$ portion of the indicies have non-zero values.\\
\indent The sparsification procedure in each step leads to an error in the parameter model and thus slows down the convergence. To overcome this issue we employ the discounted error accumulation technique, similar to \cite{SGD_q4,fedlearn7}, which uses the discounted version of the error for the next model update. Before the details of the error accumulation strategy, we want to introduce the following parameters $\mathbf{w}_{n}$ and $\mathbf{W}$ which denotes the parameter model at $n$th SBS and MBS respectively. We note that to employ sparsification for model averaging it is more convenient to transmit the model difference rather than the model. To this end, we introduce the reference models $\tilde{\mathbf{W}}_{n}$ and $\tilde{\mathbf{W}}$ for SBSs and MBS respectively so that each SBS sends the model difference based on $\tilde{\mathbf{W}}$ to the the MBS and based on $\tilde{\mathbf{W}}_{n}$ to the corresponding MUs. In the proposed HFL framework, we use $\mathbf{e}$, $\mathbf{e}_{n}$, and $\mathbf{\epsilon}_{n}$ to keep the model errors due to sparsification before downlink from MBS to SBSs, downlink from $SBS_{n}$ to MU and uplink from $SBS_{n}$ to MBS respectively. The overall procedure for the HFL framework is illustrated  in Algorithm \ref{alg:c}, where error accumulation strategy is employed at lines 21, 28, 34 and parameters $\beta_{m}$ and $\beta_{s}$ are the discount factors for the error accumulation.

\begin{algorithm}
\caption{Hierarchical Federated Leraning}\label{alg:c}
\begin{algorithmic}[1]
\State Initialize $\mathbf{W}$ and $\tilde{\mathbf{W}}$
\For{$n=1,2,\ldots N$}
\State Initialize $\mathbf{W}_{n}$ and $\tilde{\mathbf{W}_{n}}$
\EndFor
\For{$k=1,2,\ldots K$}
\State Initialize $\mathbf{w}_{k}$
\EndFor
\For{$t=1,\ldots T-1$}
\State \textbf{Computation and Uplink}:
\For{$k=1,2,\ldots K$}
\State Randomly select a mini-batch $\mathcal{I}_k\subseteq\mathcal{D}_k$
\State Calculate $\mathbf{g}_{k,t}=\frac{1}{|\mathcal{I}^{k}|}\sum_{i\in\mathcal{I}^{k}}\nabla f_i(\mathbf{w}_t)$
\State $\mathbf{u}_{k,t} = \sigma \mathbf{u}_{k,t-1} + \mathbf{g}_{k,t}$
\State $\mathbf{v}_{k,t} = \mathbf{v}_{k,t-1} + \mathbf{u}_{k,t}$
\State $g_{th} \leftarrow \phi^{ul}_{MU}$ of $|\mathbf{v}_{k,t}|$
\State $mask \leftarrow |\mathbf{v}_{k,t}|\geq g_{th}$
\State $\hat{\mathbf{g}}_{k,t} = \mathbf{v}_{k,t} \odot mask$
\State $\mathbf{u}_{k,t} = \mathbf{u}_{k,t} \odot \neg mask$
\State $\mathbf{v}_{k,t} = \mathbf{v}_{k,t} \odot \neg mask$
\State send $\hat{\mathbf{g}}_{k,t}$ to associated $SBS$
\EndFor
\State \textbf{Model Average}:
\For{$n=1,2,\ldots N$}
\State Update $\mathbf{W}_n(t+1) =  \tilde{\mathbf{W}}_n(t) -\eta \hat{\mathbf{g}}_{n}  + \beta_{s} \mathbf{\epsilon}_{n}(t)$
\EndFor
\If{$t$ is divisible by $H$}
\For{$n=1,2,\ldots N$}
\State $\Delta_{\mathbf{W}_n}(t+1) = \mathbf{W}_n(t+1)- \tilde{\mathbf{W}}(h)$
\State send $\Omega(\Delta_{\mathbf{W}_n}(t+1),\phi^{ul}_{SBS})$ to MBS
\State $\mathbf{e}_n(t+1)=$ 
\State $\Delta_{\mathbf{W}_n}(t+1) -\Omega(\Delta_{\mathbf{W}_n}(t+1),\phi^{ul}_{SBS})$
\EndFor
\State $\Delta_{\mathbf{W}} = \sum_{n} \Omega(\Delta_{\mathbf{W}_n}(t+1),\phi^{ul}_{SBS})+\beta_{m}e$
\State MBS transmit $\Omega(\Delta_{\mathbf{w}},\phi^{dl}_{MBS})$ to all SBSs
\State $\mathbf{e}= \Delta_\mathbf{W}-\Omega(\Delta_{\mathbf{W}},\phi^{dl}_{MBS})$
\State $\tilde{\mathbf{W}}(h+1) = \tilde{\mathbf{W}}(h) + \Omega(\Delta_{\mathbf{W}},\phi^{dl}_{MBS})$
\For{$n=1,2,\ldots N$}
\State $\mathbf{W}_n(t+1) =$
\State $\tilde{\mathbf{W}}(h) + \Omega(\Delta_\mathbf{W},\phi^{dl}_{MBS})+\mathbf{e}_n(t+1)/N$
\EndFor
\EndIf
\For{$n=1,2,\ldots N$}
\State $\delta_{\mathbf{W}_n}(t+1)=\mathbf{W}_n(t+1)-\tilde{W}_n(t)$
\State $SBS_{n}$ sends $\Omega(\delta_{\mathbf{W}_n}(t+1),\phi^{dl}_{SBS})$ to MUs
\State $\tilde{\mathbf{W}}_n(t+1)=\tilde{\mathbf{W}}_n(t)+ \Omega(\delta_{\mathbf{W}_n}(t+1),\phi^{dl}_{SBS})$
\State $\mathbf{\epsilon}_{n}(t+1)= \delta_{\mathbf{W}_n}(t+1)-\Omega(\delta_{\mathbf{W}_n}(t+1),\phi^{dl}_{SBS})$
\EndFor
\State \textbf{Update}:
\For{$n=1,2,\ldots N$}
\For{$k\in \mathcal{S}_{n}$}
\State $\mathbf{w}_{k}(t+1)=\tilde{\mathbf{W}}_{n}(t+1)$
\EndFor
\EndFor

\EndFor
\end{algorithmic}
\end{algorithm}

\section{Numerical Results}

\begin{table}[]
\caption{Simulation Parameters}
\begin{tabular}{|l|l|}
\hline
 Number of sub-carriers & $M=600$\\ \hline
 Sub-carrier spacing     &   $30KHz$ \\ \hline
 Noise power & $-150$dB \\ \hline
 MBS Tx power & $20$W \\ \hline
 SBS Tx Power & $6.3$W \\ \hline
 MU Tx power & $0.2$W \\ \hline
 Path-loss exponent & 2.8 \\ \hline
 BER & $10^{-3}$ \\ \hline
\end{tabular}
\end{table}

\subsection{Network topology}
We consider a circular area with radius  $750$ meters where users are generated uniformly randomly inside it. We consider hexagonal clusters where the diameter of circle inscribed is $500$ meters. The SBSs are exactly resided in the center of the hexagons. To mitigate the interference between the clusters, we use a simple reuse pattern \cite{reuse} as shown in Figure \ref{fig:cluster}. We assume that the fronthaul link is $100$ times faster than the UL, DL between MUs and SBSs \footnote{ For a $8\times8$ MIMO the fornthaul rate estimate is $8$ Gbps with 3GPP split option 2 and $67$ Gbps with 3GPP split option 7 \cite{5gfronthaul}.}.  Total number of clusters are $7$.

\begin{figure}
\centering
     \includegraphics[scale=0.45]{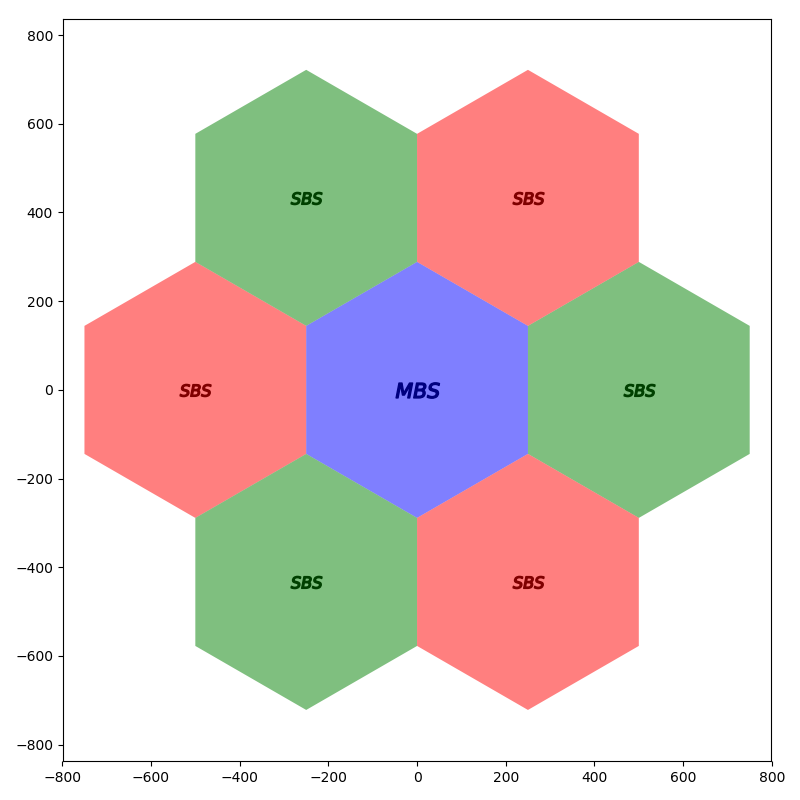}
				\caption{Clustering layout. Frequency reuse pattern is one. Each color illustrates distinct set of subcarriers.}
		\label{fig:cluster}
\end{figure}
There are $300$ sub carriers with sub carrier spacing of $30$ KHz. The maximum transmit powers of MBS and SSBs are $20$ and $6.3$W, respectively and maximum transmit power of MUs is $0.2$W \cite{earth}.

\subsection{Implementation guideline}
In our numerical analysis, we consider the  CIFAR-10 dataset for image classification problem with 10 different image classes  \cite{dataset2} and train the  ResNet18  architecture \cite{training2}. For further details regarding the trained NN structures please refer to \cite{nn}.\\
\indent For the simulations, we also utilize some large batch training tricks such as scaling the learning rate $\eta$ and employing a warm-up phase \cite{training1}. In all simulations, data sets are divided among the MUs without any shuffling and through the iterations MUs train the same subset of the dataset as in the FL framework and we set the batch size for training to $\beta=64$. In general, batch size $K\times \beta=128$ is accepted as the baseline batch size with the corresponding learning rate $\eta=0.1$ and the initial learning rate is adjusted according to the cumulative batch size $K\times\beta$ accordingly \cite{training1,training2}. Hence, we set the initial learning to $0.25$, also we consider the first $5$ epoch as the gradual warm-up phase where training starts with a small learning rate and then increased linearly at each iteration until it reaches the value of the given initial learning rate. For the network training, we follow the guideline in \cite{sgd_local4},  we train the network for 300 epochs,  and at the end of 150th epoch  we drop the initial learning rate by factor of 10, and similarly end of the 225th epoch we drop the learning rate by factor of 10 again. Further, for the weight decay factor\footnote{We dont apply weight decay to batch normalization parameters} and the momentum term we use $w=0.0001$ and $\sigma=0.9$ respectively in all simulations. Finally, for the discounted error accumulation we set $\beta_{m}=0.2$ and $\beta_{s}=0.5$.

\subsection{Results}
We first study the amount of speed up in latency achieved by HFL versus FL. We measure the speed by comparing the per iteration latency of HFL i.e., $\Gamma^{HFL}$ and FL, $T^{FL}$. More specifically, speed up $=\frac{T^{FL}}{\Gamma^{HFL}}$. By varying the number of MUs in each cluster, and for different periods of $H=2,4,6$, we plot the speed up achieved by HFL, when sparsity parameters $\phi^{ul}_{MU}=0.99$, $\phi^{dl}_{SBS}=0.9$, $\phi^{ul}_{SBS}=0.9$, $\phi^{dl}_{MBS}=0.9$  are used, in Figure \ref{fig:latency_comp}. We observe that HFL achieves good latency speed up with respect to FL and it improves when the period increases. 

\begin{figure}
\centering
     \includegraphics[scale=0.5]{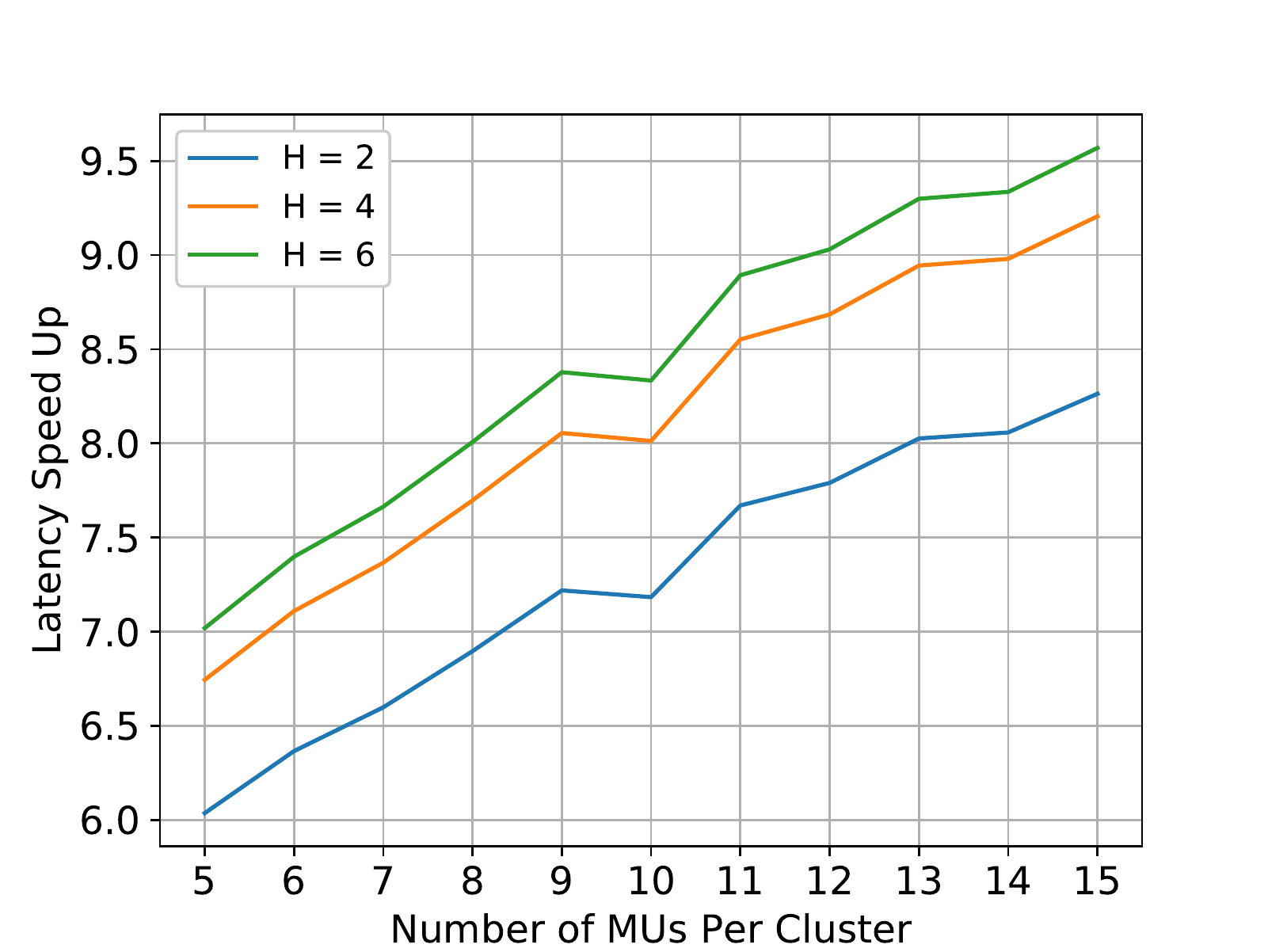}
				\caption{Latency Speed Up HFL versus FL.}
		\label{fig:latency_comp}
\end{figure}

Clustering reduces the communication distance and as a result improves the SNR. The amount of improvement depends on the amount of reduction in path-loss. In Figure \ref{fig:latency_comp_pathloss}, we illustrate the amount of latency speed up due to clustering as a function of the path-loss exponent, $\alpha$. When the path-loss exponent is increased, the SNR in centralized scheme is punished more severely than clustering scheme due to longer communication paths. Thus, the latency speed up improves when the path-loss is more severe.
\begin{figure}
\centering
     \includegraphics[scale=0.5]{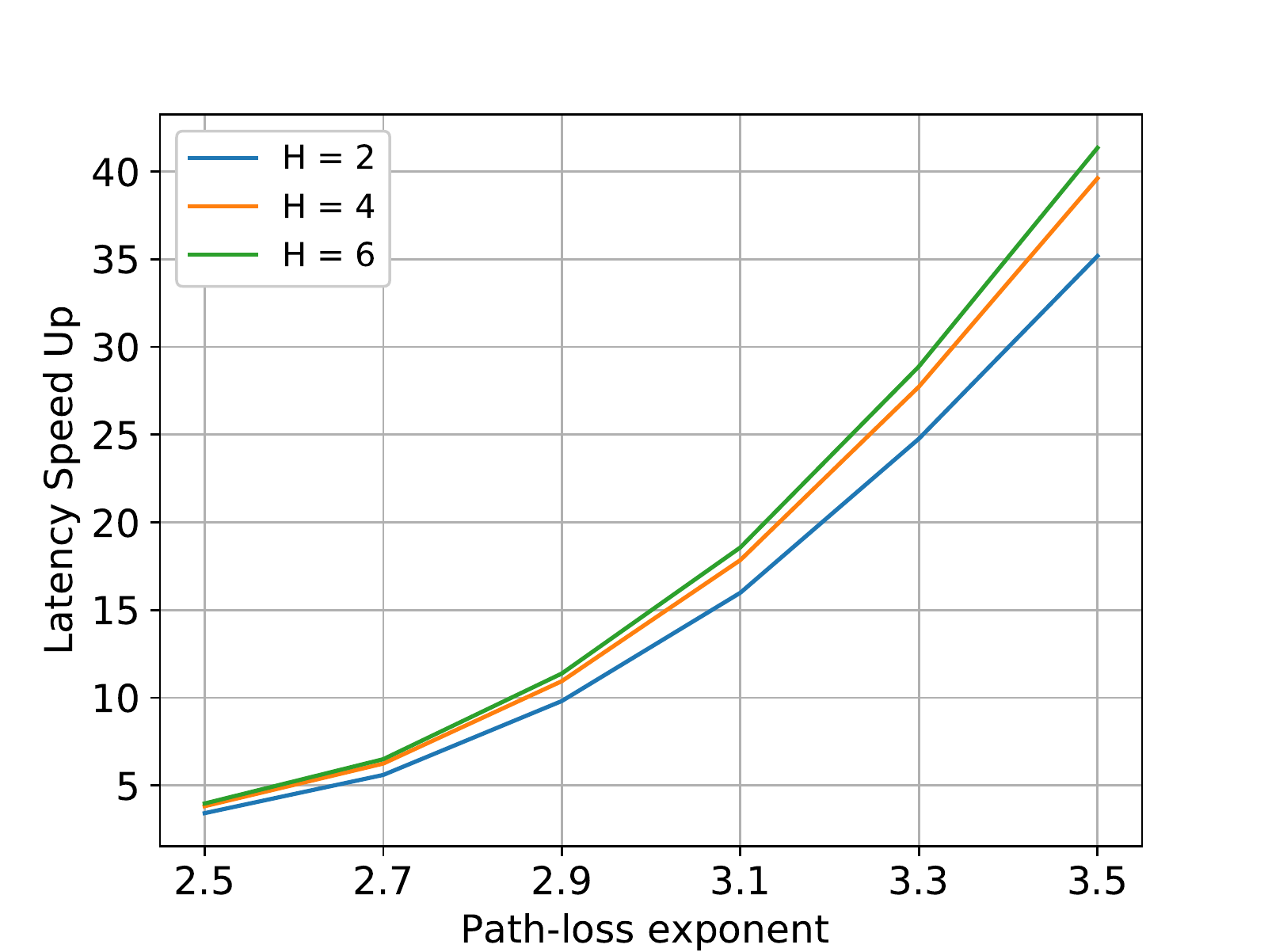}
				\caption{Latency speed Up HFL versus FL as a function of path-loss exponent $\alpha$.}
		\label{fig:latency_comp_pathloss}
\end{figure}

To see the importance of sparsification, we compare the HFL and FL with sparse HFL and sparse FL in Figure \ref{fig:sparse_comp_c} and \ref{fig:sparse_comp_noc}, respectively. For both FL and HFL the sparsification provides a significant improvment. However, the latency improvement in HFL is more robust with respect to increasing the number of MUs. This is due to the fact that MBS is serving more number of MUs than a single MBS, and hence the scarcity of resources in macro cell has more impact to the latency than small cells.

The Top-1 accuracy achieved by FL and HFL algorithms are illustrated in Figure \ref{fig:resnetacc} for CIFAR-10 data set trained with ResNet 18. We observe that latency speed up delivered by HFL over FL schemes does not compromise the accuracy of the ML model. In fact, a closer look at the accuracy (average over $5$ runs) presented in Table \ref{tab:acc} show that HFL is able to achieve a better accuracy than FL in all situations. The $\emph{\text{mean}}\pm \emph{\text{standard error mean}}$ results for the last epoch is reprted in Table \ref{tab:acc}, where the \emph{Baseline} result is obtained by training a single MU on the whole training set. We observe a small degradation in the accuracy for our introduced hierarchical distributed learning strategy. We conjecture that this degradation is mainly due to the use of momentum SGD at each MU instead of a global momentum and due to sparsification. We also observe that FL, based on \cite{SGD_sparse1}, performs poorly compared to HFL. We believe that this poor performance is mainly due to the downlink sparsification that we consider.\\ 
\indent We believe that using an additional global momentum term \cite{sgd_local4} or utilizing momentum averaging strategy \cite{sgd_local3} accuracy of the HFL can be improved further.

\begin{figure}[ht]
\begin{subfigure}{.55\textwidth}
  \centering
  \includegraphics[width=.8\linewidth]{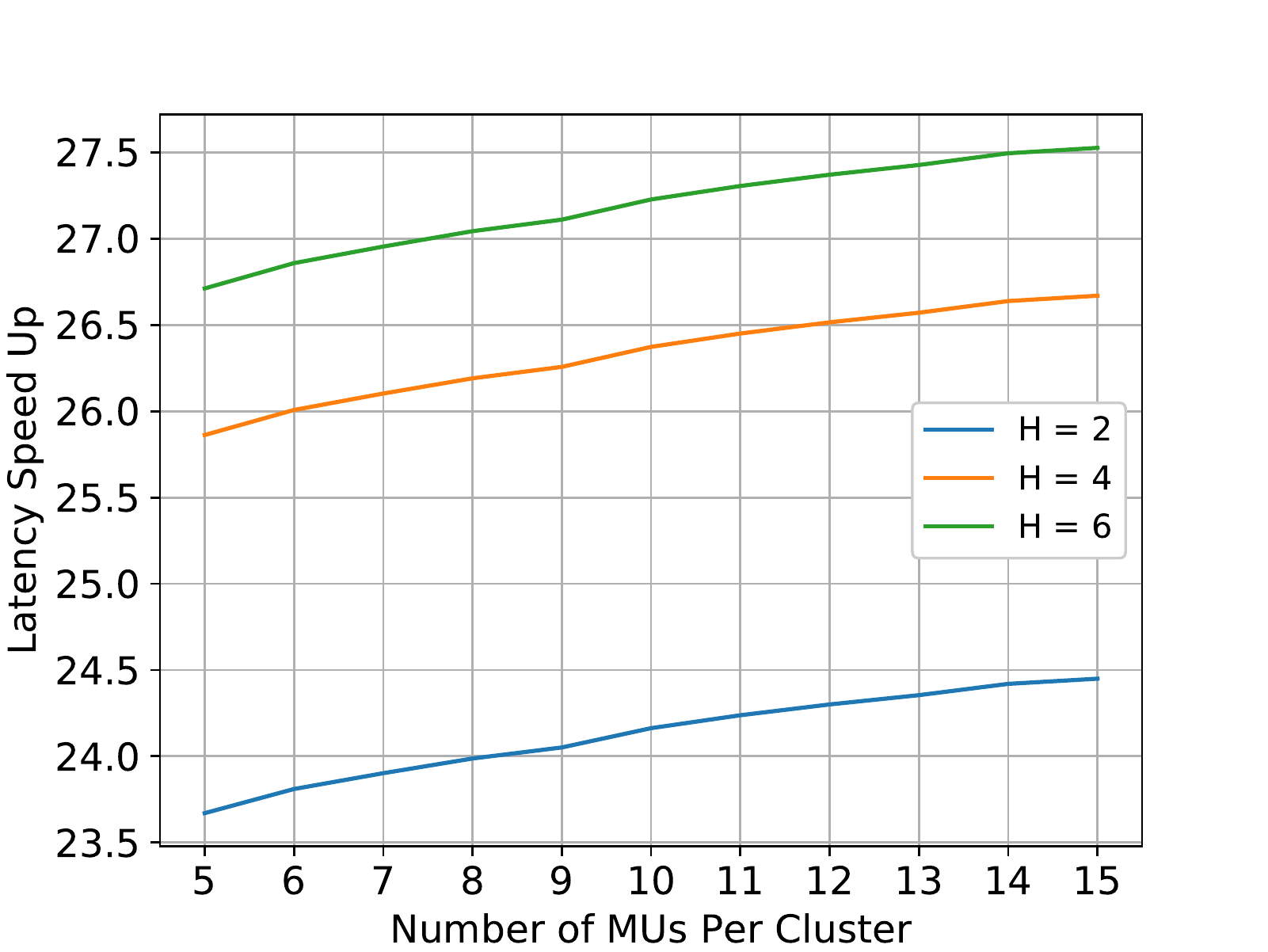}  
  \caption{HFL}
  \label{fig:sparse_comp_c}
\end{subfigure}
\begin{subfigure}{.55\textwidth}
  \centering
  \includegraphics[width=.8\linewidth]{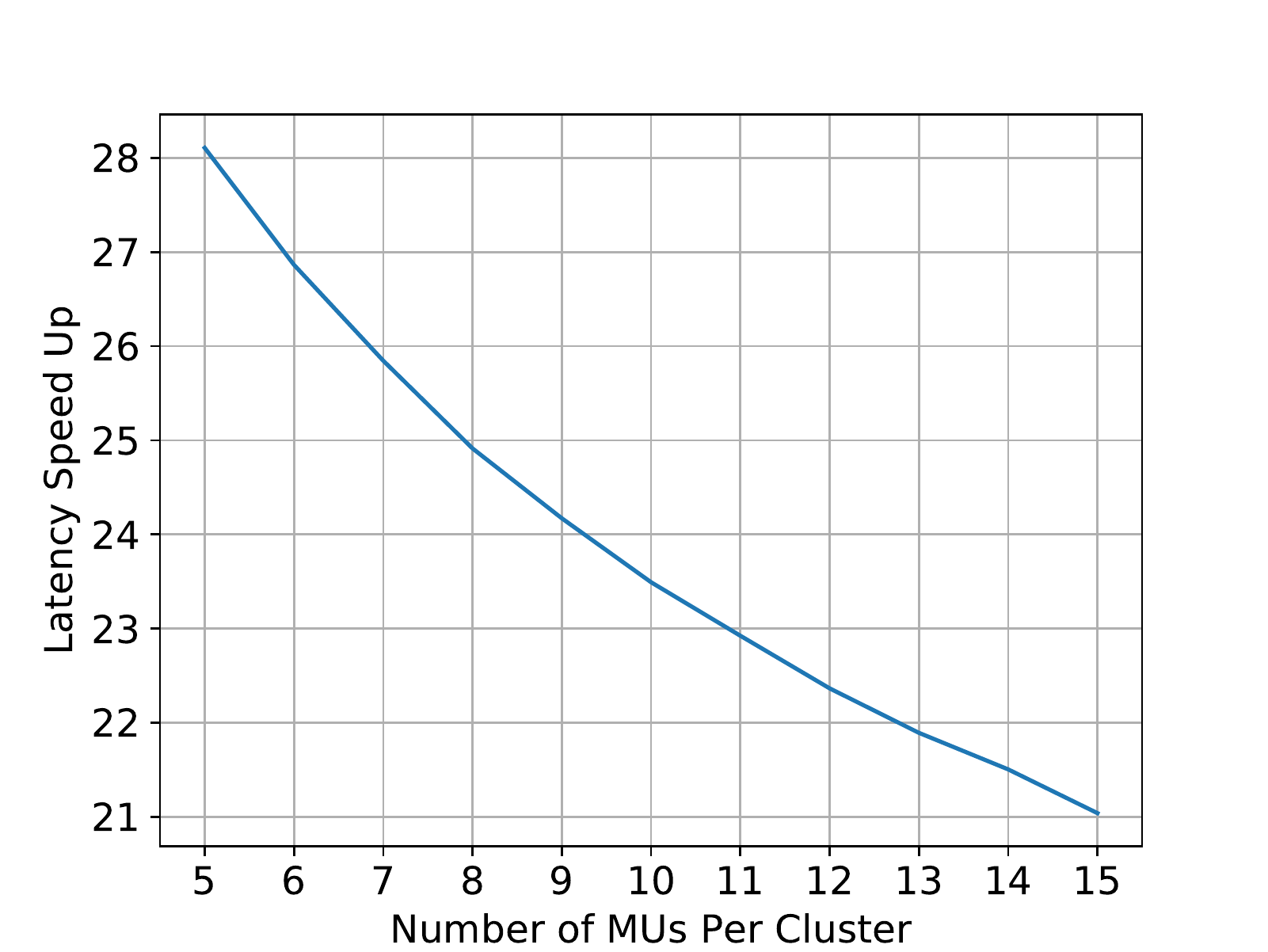}  
  \caption{FL}
  \label{fig:sparse_comp_noc}
\end{subfigure}
\caption{Latency speed up due to sparsification}
\label{fig:sparse_comp}
\end{figure}



\begin{figure}
\centering
     \includegraphics[scale=0.5]{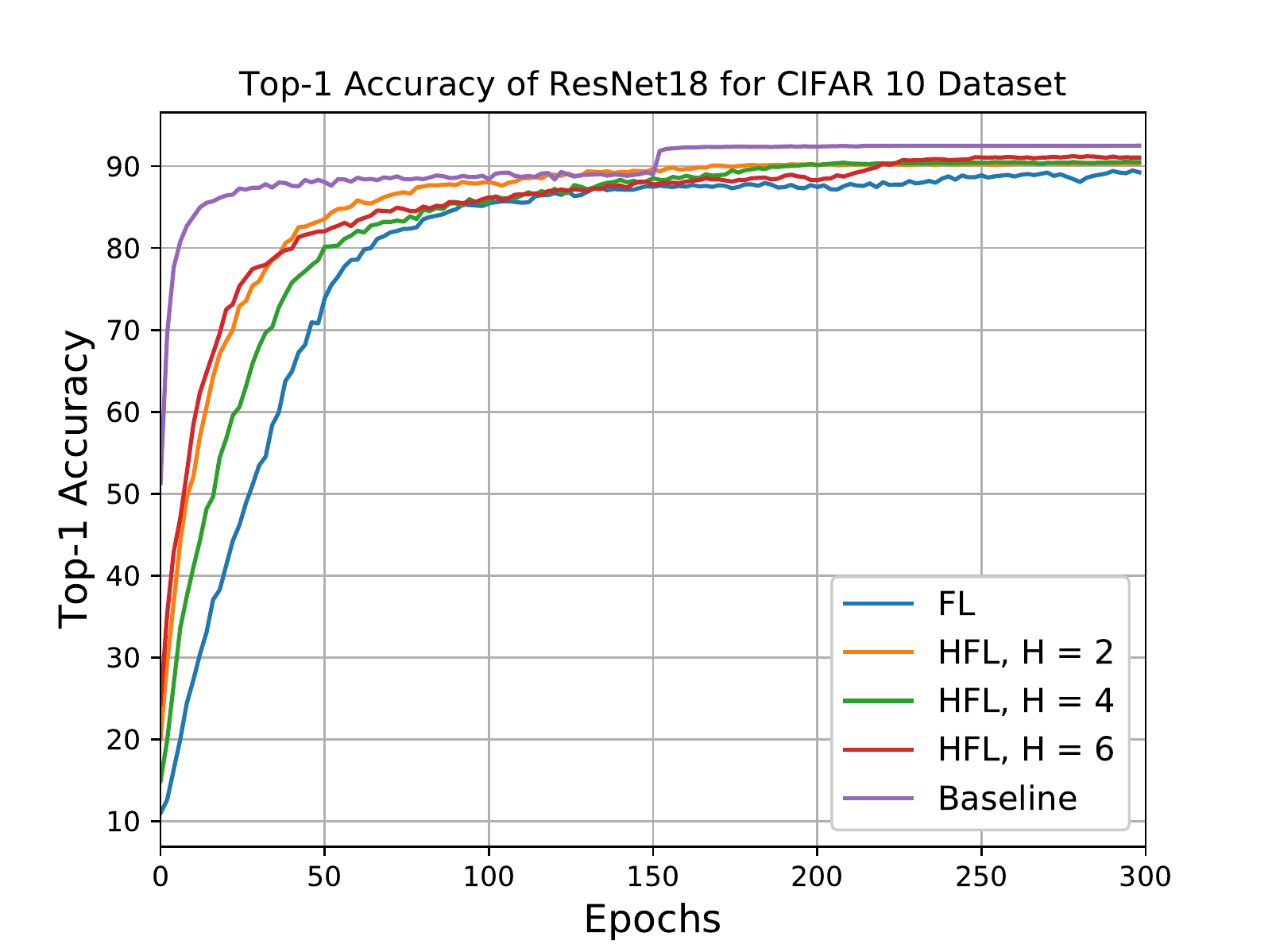}
				\caption{Top-1 accuracy }
		\label{fig:resnetacc}
\end{figure}

 \begin{table*}[]
 \caption{Top 1 accuracy results obtained by different strategies}
 \label{tab:acc}
\begin{center}
    \begin{tabular}{ |  c | c |c |c|}
    \hline
 		  CIFAR-10 \cite{dataset2} / ResNet18 \cite{training2}  & Baseline  & $92.48\pm 0.13$& -\\ \hline 
           CIFAR-10 \cite{dataset2} / ResNet18 \cite{training2}  & FL & $89.23\pm 0.42$& 28 MUs\\ \hline 
          CIFAR-10 \cite{dataset2} / ResNet18 \cite{training2}  & HFL, $H=2$ & $90.27\pm 0.11$& $7$ clusters $\times$ $4$ MUs\\ \hline 
           CIFAR-10 \cite{dataset2} / ResNet18 \cite{training2}  & HFL, $H=4$ & $90.474\pm 0.20$& $7$ clusters $\times$ $4$ MUs\\ \hline 
           CIFAR-10 \cite{dataset2} / ResNet18 \cite{training2}  & HFL, $H=6$ & $91.03\pm 0.19$& $7$ clusters $\times$ $4$ MUs\\ \hline 
     \end{tabular}		
 \end{center}
\end{table*}
\subsection{Discussions}
 As mentioned above, a momentum correction strategy for the clusters can help to improve the accuracy as weel as increasing the convergence speed We will consider this as a future work.\\
\indent  In addition, one of the recent research area regarding the federated learning framework is the non-IID data distribution among the MUs \cite{fedlearn5,fedlearn7,fedlearn8,fedlearn9} hence, we are planning to extend our study to the non-IID distribution scenario as well. Finally, another research direction that we are planning to investigate is the optimal batchsize for MUs. In \cite{training3}, it has been shown that learning speed increases with the batchsize until certain point, hence we believe that a good strategy for federated learning is to adjust the batchsize according to the number users. We are planing to extend our analysis in this direction and do an extended analysis on training time including the computation time of the MUs.

\bibliographystyle{IEEEtran}
\bibliography{IEEEabrv,ref}

\end{document}